\def \R {\mathbb{R}}
\def \K {\mathcal{K}}
\def \F {\mathcal{F}}
\def \G {\mathcal{G}}
\def \x {\mathbf{x}}
\def \z {\mathbf{z}}
\def \ze {\mathbf{0}}
\def \w {\mathbf{w}}
\def \y {\mathbf{y}}
\DeclareMathOperator*{\argmin}{argmin}
\newtheorem{thm}{Theorem}
\newtheorem{lem}{Lemma}
\newtheorem{assum}{Assumption}
\newcommand\figcaption{\def\@captype{figure}\caption}
\newcommand\tabcaption{\def\@captype{table}\caption}
\begin{document}
	
\title{Revisiting Multi-Agent Asynchronous Online  Optimization with Delays: \\the Strongly Convex Case}
 
 \author{
  Lingchan Bao\textsuperscript{\rm 1}, Tong Wei\textsuperscript{\rm 2}, Yuanyu Wan\textsuperscript{\rm 3}\\
  \affiliations \textsuperscript{\rm 1}Southeast University-Monash University Joint Graduate School, Southeast University, Suzhou 215123, China\\
  \textsuperscript{\rm 2}School of Computer Science and Engineering, Southeast University, Nanjing 211189, China\\
\textsuperscript{\rm 3}School of Software Technology, Zhejiang University, Ningbo 315100, China\\
\{baolc, weit\}@seu.edu.cn, wanyy@zju.edu.cn
 }

 \maketitle
 
 \begin{abstract}
 We revisit multi-agent asynchronous online optimization with delays, where only one of the agents becomes active for making the decision at each round, and the corresponding feedback is received by all the agents after unknown delays. Although previous studies have established an $O(\sqrt{dT})$ regret bound for this problem, they assume that the maximum delay $d$ is knowable or the arrival order of feedback satisfies a special property, which may not hold in practice. In this paper, we surprisingly find that when the loss functions are strongly convex, these assumptions can be eliminated, and the existing regret bound can be significantly improved to $O(d\log T)$ meanwhile. Specifically, to exploit the strong convexity of functions, we first propose a delayed variant of the classical follow-the-leader algorithm, namely FTDL, which is very simple but requires the full information of functions as feedback. Moreover, to handle the more general case with only the gradient feedback, we develop an approximate variant of FTDL by combining it with surrogate loss functions. Experimental results show that the approximate FTDL outperforms the existing algorithm in the strongly convex case. 
 \end{abstract}

	\section{Introduction}
	Online learning \citep{Online:suvery,Hazan2016} has received great attention over the past decades, due to the emergence of large-scale applications such as online advertising \citep{Tan-2012}, recommendation systems \citep{Lin2020}, and portfolio management \citep{Agarwal2006}. In general, it is formulated as a repeated game between an agent and an adversary, where the agent first selects a decision $\x_t$ from a set $\K\subseteq \R^n$ at each round $t$ and then the adversary selects a loss function $f_t(\cdot):\K\mapsto\mathbb{R}$. The performance of the agent is measured by regret 
	\[
	R_T = \sum_{t=1}^T f_t(\x_t) - \min_{\x\in\K}\sum_{t=1}^T f_t(\x)
	\]
	which is the gap between the cumulative loss of the agent and that of a fixed optimal decision over the total $T$ rounds. The most popular paradigm for online learning is online convex optimization (OCO) \citep{Zinkevich2003,Hazan2007,Abernethy08,Duchi2011,Adam-ICLR,wang-ICLR20,Wan-AAAI-2021-C,Wan-PAMI-2022}, which focuses on the case with convex loss functions and decision sets. By exploiting the general convexity, these previous studies on OCO have achieved an optimal regret bound of $O(\sqrt{T})$. Moreover, if the loss functions are strongly convex, an optimal regret bound of $O(\log T)$ can be further achieved.
	%For example, both online gradient descent \citep{Zinkevich2003} and follow the regularized leader \citep{Shai07} enjoys the optimal  regret bound
	
	However, the standard OCO assumes that the feedback about the loss function $f_t(\cdot)$ can be received immediately after the decision $\x_t$ is made, which may not hold in many practical applications, especially those with distributed scenarios. For example, in distributed sensor networks \citep{4434303,5740320,Kia2014,2938639}, multiple sensors capable of collecting, processing, and exchanging data are widely used in online detection, classification, and tracking of some targets. Given the limited capabilities of sensors, fully decentralized communication and computation are crucial to avoid overloading small sensors or overwhelming busy networks. For the same reason, only the sensor located near the target is activated to perform detection or classification at a specific moment, and communicates with its neighboring sensors for collaboration. Note that both the processes of target observation and data communication between these sensors will cause delays.  Moreover, due to the possible heterogeneity of these sensors, different sensors collaborate asynchronously and suffer different delays in general.

	Motivated by these applications, \citet{Hsieh2022} extend the standard OCO into multi-agent asynchronous online optimization with delays. The setting involves a set of agents $\mathcal{M} = \{1, \ldots, M\}$, and only one agent becomes active in each round \( t \) to make the decision $\x_t$. Due to the effect of delays and asynchronicities, each agent $i\in \mathcal{M}$ can only receive the feedback about $f_t(\cdot)$ at the end of round $t+d_{t,i}-1$, where $d_{t,i}\in\mathbb{Z}^{+}$ denotes an arbitrary delay. It is well-known that if only one agent exists, i.e., $M=1$, various algorithms have been proposed to achieve an optimal $O(\sqrt{dT})$ regret bound for convex functions and an optimal $O(d \log T)$ regret bound for strongly convex functions in the worst case \citep{Weinberger02_TIT,Joulani13,Quanrud2015,Joulani16,Wan2022}, where $d=\max_{t,i}\{d_{t,i}\}$ denotes the maximum delay. However, compared with the single-agent case, multi-agent OCO with delays faces two new challenges: the non-monotonicity of feedback sequence and lack of synchronization, which make the time-decaying and even delay-dependent learning rates utilized in these single-agent delayed algorithms unavailable~\citep{Hsieh2022}.

	To tackle these challenges, \citet{Hsieh2022} propose the first algorithm called delayed dual averaging (DDA) for the multi-agent case. If the maximum delay $d$ is knowable or the arrival order of feedback satisfies a special property, they prove that DDA can utilize only local information to adjust the learning rate and achieve the optimal $O(\sqrt{dT})$ regret bound for convex functions. However, these assumptions may also not hold in practice, and thus it is natural to ask whether these assumptions can be removed. Moreover, it remains unclear whether the strong convexity of functions can be further utilized to improve the regret as in the single-agent case. In this paper, we provide an affirmative answer to both questions by proposing new algorithms that can exploit the strong convexity of functions to reduce the regret bound without any prior information and assumption about the delay. Our key idea is to extend the classical follow-the-leader (FTL) algorithm \citep{hanan,FTPL05,Hazan2007}, which originally is parameter-free and enjoys an $O(\log T)$ regret bound for the standard OCO with strongly convex functions, into the multi-agent case. Specifically, we first consider the case with the delayed full-information feedback, and propose follow-the-delayed-leader (FTDL) by replacing all historical loss functions minimized in each round of FTL with all received loss functions. Our analysis reveals that FTDL can enjoy an $O(d\log T)$ regret bound for strongly convex functions in a parameter-free manner, similar to FTL. Furthermore, we study the more general case with only delayed gradient feedback, and further develop an approximate variant of our FTDL by combining it with gradient-based surrogate loss functions \citep{Hazan2007}. It is worth noting that this approximate variant not only enjoys the same $O(d\log T)$ regret bound for strongly convex functions, but can be even implemented more efficiently than FTDL. Finally, we conduct experiments on four publicly available datasets to demonstrate that our approximate FTDL can outperform the existing DDA algorithm in the case with strongly convex functions.

	\section{Related Work}
	In this section, we briefly review the related work about the standard OCO, and multi-agent OCO with delays.
	
	\subsection{The Standard OCO}
	Since the pioneering work of \cite{Zinkevich2003}, there has been a growing research interest in developing and analyzing algorithms for OCO \citep{Hazan2007,Abernethy08,Duchi2011,Adam-ICLR,wang-ICLR20,Wan-AAAI-2021-C,Wan-PAMI-2022}. Roughly speaking, these existing OCO algorithms can be divided into two categories. The first category includes online gradient descent (OGD) \citep{Zinkevich2003} and its variants, which can be viewed as an extension from gradient descent in offline optimization \citep{Boyd04} into the online setting. The other category contains follow-the-leader (FTL) \citep{hanan} and its variants, where the so-called ``leader'' is the optimal decision for all historical loss functions. Note that both categories are able to achieve an $O(\sqrt{T})$ regret bound for convex functions, and an $O(\log T)$ regret bound for strongly convex functions \citep{Zinkevich2003,Hazan2007,Hazan2016}. Moreover, these $O(\sqrt{T})$ and $O(\log T)$ regret bounds have been proved to be optimal for convex and strongly convex functions, respectively \cite{Abernethy08}.

	Here, we only introduce the detailed procedures about FTL and its variants, because they are more related to this paper. Specifically, at each round $t$, FTL simply chooses the following decision
	\begin{equation}
	\label{FTL-decision}
	\x_t = \argmin_{\x \in \K} \sum_{s=1}^{t-1} f_s(\x).
	\end{equation}
	Despite of the simplicity, it can enjoy the optimal $O(\log T)$ regret bound for strongly convex functions \citep{Hazan2007}. However, there are two unsatisfactory points that limit the application of FTL. First, one can prove that in the case with only general convex functions, the regret of FTL could be $\Omega(T)$ \citep{Hazan2016}. Second, to implement FTL, we need to solve an offline optimization problem per round, which may become a computational bottleneck in practice. 

	To address these two issues simultaneously, previous studies \citep{hanan,FTPL05,ShalevShwartz2006ConvexRG,Hazan2016} have proposed to apply FTL with a linearized approximation of the original loss functions, i.e.,
	\[
	\tilde{f}_t(\x) = f_t(\x_t) + \langle\nabla f_{t}(\x_t), \x - \x_t\rangle
	\]
	and a randomized or strongly convex regularization $\mathcal{R}(\cdot):\K\mapsto\mathbb{R}$. In this way, the decision for each round $t$ becomes
	\begin{equation*}
	\begin{split}
	% \x_t =& \argmin_{\x \in K} \sum_{s=1}^{t-1} \tilde{f}_s(\x)\\
	\x_t=&\argmin_{\x \in \K} \sum_{s=1}^{t-1} \langle\nabla f_{s}(\x_s), \x \rangle+\frac{\mathcal{R}(\x)}{\eta_t}
	\end{split}
	\end{equation*}
	where $\eta_t$ is a parameter for trading off the approximate losses and regularization. Note that when $\mathcal{R}(\cdot)$ is chosen to be $1$-strongly convex, e.g., $\mathcal{R}(\x)=\|\x\|_2^2/2$, this variant of FTL is called follow-the-regularized-leader (FTRL) or dual averaging, and can achieve the optimal $O(\sqrt{T})$ regret bound for convex functions by setting an appropriate $\eta_t$ \citep{ShalevShwartz2006ConvexRG,Hazan2016}. 

	Moreover, \cite{Hazan2007} consider $\lambda$-strongly convex functions, and propose to apply FTL with the following approximation
	\begin{equation}
	\label{surrograte-sc}
	\tilde{f}_t(\x) = f_t(\x_t) + \langle\nabla f_{t}(\x_t), \x - \x_t\rangle + \frac{\lambda}{2} \|\x - \x_t\|_2^2.
	\end{equation}
	The corresponding decision for each round $t$ can be written as follows
	\begin{equation*}
	\begin{split}
	% \x_t =& \argmin_{\x \in K} \sum_{s=1}^{t-1} \tilde{f}_s(\x)\\
	\x_t=&\argmin_{\x \in \K} \sum_{s=1}^{t-1} \left(\langle\nabla f_{s}(\x_s), \x \rangle + \frac{\lambda}{2} \|\x - \x_s\|_2^2\right).
	\end{split}
	\end{equation*}
   This variant of FTL is called follow-the-approximate-leader (FTAL) and can enjoy the optimal $O(\log T)$ regret bound for strongly convex functions \citep{Hazan2007}.

	\subsection{Multi-Agent OCO with Delays}
	The study of OCO with delays can date back to \citet{Weinberger02_TIT}, who consider the case with only one agent and a fixed delay, i.e., $d_t=d$ for any $t\in[T]$. They propose a black-box technique that maintains $d$ instances of a standard OCO algorithm and alternately utilize these instances to generate the new decision. In this way, each instance is utilized once every $d$ rounds, and actually only needs to handle a subsequence of the total $T$ rounds as in the non-delayed case. By combining this black-box technique with the optimal algorithms for OCO, one can achieve $O(\sqrt{dT})$ and $O(d\log T)$ regret bounds for convex and strongly convex functions, respectively. Note that as proved by \citet{Weinberger02_TIT}, these delay-dependent regret bounds are optimal in the worst case. Later, \citet{Joulani13} extend the black-box technique of \citet{Weinberger02_TIT} into the case with one agent and arbitrary delays. Even without prior information of delays, they achieve the same regret bounds for convex and strongly convex functions. However, these black-box techniques \citep{Weinberger02_TIT,Joulani13} require much higher memory costs than the standard OCO algorithm. To address this limitation, there has been growing research interest in developing memory-efficient delayed variants of various standard OCO algorithms \citep{Langford09,McMahan14,Quanrud2015,Joulani16,Li_AISTATS19,ICML20_Mertikopoulos,ICML21_delay,Wan-NIPS22,Wan2022,Wan-NIPS24,Wan-ICML24,Wu-WWW24}.

	Despite the great flourish of research on delayed OCO, the general case with multiple agents has rarely been investigated. As emphasized by \citet{Hsieh2022}, multi-agent OCO with delays is more challenging due to the following two reasons.
	\begin{itemize}
		\item First, these agents may suffer different communication latency, and thus can only receive the delayed feedback in an asynchronous way. Due to the asynchronicity and change of the active agent, the amount of available feedback is generally no longer monotone.
		\item Second, to promote high privacy and information security, it is common to assume that agents do not have access to a global counter that indicates how many decisions have been made at any given stage, and any non-local information.
	\end{itemize}
	To tackle these challenges, \citet{Hsieh2022} propose delayed dual averaging (DDA) that computes the decision for each round $t$ as follows
	\[
	\x_t=\argmin_{\x\in\K}\sum_{s\in\F_t}\langle\nabla f_s(\x_s),\x\rangle+\frac{\mathcal{R}(\x)}{\eta_t}
	\]
	where $\F_t$ is a set including the timestamps  of all feedback received by the $t$-th active agent. Let $|\F_t|$ denotes the cardinality of $\F_t$ (i.e., the number of feedback received up to round $t$). They first prove that DDA can achieve the $O(\sqrt{dT})$ regret bound for convex functions, by setting
	\[
	\eta_t=O\left(1/\sqrt{d\left(|\F_t|+d\right)}\right). 
	\]
	Unfortunately, the value of $d$ is generally unknown in practice, which could limit the application of DDA.\footnote{Even with the prior information of $d$, \citet{Hsieh2022} actually also need to assume that $|\F_s|\leq|\F_t|$ for any $s\in \F_t$.}  Note that in the single agent case, this issue can be solved by utilizing the standard ``doubling trick'' \citep{LEA97} to estimate the value of $d$ on the fly and then adaptively adjust the parameter $\eta$. However, it is highly non-trivial to extend this idea into the multi-agent case due to the lack of a global counter. As a result, \citet{Hsieh2022} propose an alternative mechanism inspired by data-dependent OCO algorithms, e.g., AdaGrad \citep{Duchi2011}, to adjust $\eta_t$ without prior information of $d$, but introduce an additional assumption on the arrival order of feedback. It remains unclear whether the regret of multi-agent OCO with delays can be bounded without any prior information and assumption about the delay.

	\section{Our Results}
	In this section, we first introduce some common assumptions, and then introduce our two algorithms as well as the corresponding theoretical guarantees.
	\subsection{Assumptions}
	Following previous studies \citep{Online:suvery,Hazan2016,Hsieh2022}, we first introduce two assumptions about the gradient norm of functions and the decision set.
		\begin{assum}
		\label{assum2}
		The gradients of all loss functions are bounded by some constant $G$, i.e.,
		\[\|\nabla f_t(\x)\|_2\leq G\]
		for any $\x\in\K$ and $t\in[T]$.
	\end{assum}
		\begin{assum}
		\label{assum1}
		The decision set $\K$ contains the origin $\ze$, and its diameter is bounded by some constant $D$, i.e.,
		\[\|\x-\y\|_2\leq D\]
		for any $\x,\y\in\K$.
	\end{assum}
	Moreover, as the critical difference between this paper and \citet{Hsieh2022}, we introduce the following assumption about the strong convexity of functions.
	\begin{assum}
		\label{assum3}
		All loss functions are $\lambda$-strongly convex over the decision set $\K$, i.e.,
		\[f_t(\y)\geq f_t(\x)+\langle\nabla f_t(\x),\y-\x\rangle+\frac{\lambda}{2}\|\x-\y\|^2_2\]
		for any $\x,\y\in\K$ and $t\in[T]$.
	\end{assum}

	\subsection{Follow-the-Delayed-Leader}
	Before introducing our algorithms, we first provide a formal definition for the available information held by each agent $i\in[M]$. To be precise, let $\G_i$ denote the set including the timestamps for all feedback received by each agent $i\in[M]$. In the beginning, the set should be initialized as $\G_i=\emptyset$. Then, due to the effect of delays and asynchronicity, each agent $i$ will receive the feedback from some previous round $s$, and update the set as $\G_i=\G_i\cup\{s\}$. Moreover, it will relay the received feedback if necessary. In this way, when an agent $i$ is activated at round $t$, it can only make the decision based on the feedback from rounds in the latest set $\G_i$ that may only be a subset of $[t-1]$.
	
	Therefore, even if the functions are strongly convex, FTL in Eq.~\eqref{FTL-decision} cannot be directly utilized. To address this issue, we propose a delayed variant of FTL for multi-agent OCO with delays that is called follow-the-delayed-leader (FTDL) and utilizes the following decision
	\[
	\x_t=\argmin_{\x\in\K}\sum_{s\in\F_t}f_s(\x)
	\]
	where $\F_t$ is set to be the latest $\G_i$ of the active agent $i$ at round $t$. Note that when $\F_t=\emptyset$, i.e., the active agent at round $t$ has not received any feedback yet, the decision $\x_t$ can be arbitrary in $\K$. The detailed procedures of our FTDL are summarized in Algorithm \ref{alg1} from the point of view of any single agent $i$. Despite the simplicity, we prove FTDL enjoys the following regret bound, which demonstrates that it can exploit the strong convexity automatically.
	\begin{algorithm}[tb]
		\caption{Follow-the-Delayed-Leader (Agent $i$)}
		\label{alg1}
		\begin{algorithmic}[1]
			%\STATE \textbf{Initialize:}
			\STATE \textbf{Initialize:} $\G_i=\emptyset$ and $t=1$
			\WHILE{not stopped}
			\STATE Asynchronously receive feedback $f_s(\x)$ of round $s$
			\STATE Update $\G_i=\G_i\cup\{s\}$
			\IF{the agent becomes active}
			\STATE Set $\F_t=\G_i$
			\STATE Play $\x_t=\argmin_{\x\in\K}\sum_{s\in\F_t}f_s(\x)$
			\ENDIF
			\ENDWHILE
		\end{algorithmic}
	\end{algorithm}
	\begin{thm}
		\label{thm1}
		Under Assumptions \ref{assum2} and \ref{assum3}, Algorithm \ref{alg1} ensures
		\[
		R_T\leq\frac{2dG^2(1+\ln T)}{\lambda}.
		\]
	\end{thm}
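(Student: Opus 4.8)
The plan is to reduce the delayed, multi-agent regret to the classical follow-the-leader analysis by comparing the played decisions $\x_t$ against the \emph{undelayed} leaders. Write $\x^\ast=\argmin_{\x\in\K}\sum_{t=1}^{T}f_t(\x)$ and, for each $t$, let $\z_t=\argmin_{\x\in\K}\sum_{s=1}^{t}f_s(\x)$ be the leader that has already seen $f_t$. The first step is the standard ``be-the-leader'' inequality $\sum_{t=1}^{T}f_t(\z_t)\le\sum_{t=1}^{T}f_t(\x^\ast)$, proved by a one-line induction on $T$ (using that $\z_t$ minimizes $\sum_{s=1}^{t}f_s$). This turns the problem into bounding $\sum_{t=1}^{T}\bigl(f_t(\x_t)-f_t(\z_t)\bigr)$; by convexity and the gradient bound in Assumption~\ref{assum2}, each term is at most $\langle\nabla f_t(\x_t),\x_t-\z_t\rangle\le G\|\x_t-\z_t\|$, so it suffices to show $\|\x_t-\z_t\|\le 2dG/(\lambda t)$.

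Second, I would pin down the structure of $\F_t$ using only $d=\max_{t,i}d_{t,i}$: the feedback of any round $s\le t-d$ has necessarily reached the agent active at round $t$ (it is generated at the end of round $s$ and delivered no later than the end of round $s+d-1\le t-1$), while no feedback of a round $s\ge t$ can have, so $\{1,\dots,t-d\}\subseteq\F_t\subseteq\{1,\dots,t-1\}$. In particular $|\{1,\dots,t\}\setminus\F_t|\le d$, and $\F_t=\emptyset$ forces $t\le d$. This is the only place the delay/asynchrony model enters, and it deliberately sidesteps the non-monotonicity of $\{\F_t\}_t$: the comparison is always against the fixed nested sequence $\{\z_t\}_t$, never between two $\F$'s.

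Third --- the technical heart --- is a stability lemma for leaders of strongly convex function sets: if $S\subseteq S'$ are finite sets of $\lambda$-strongly convex, $G$-Lipschitz functions and $\x_S,\x_{S'}$ minimize $\sum_{s\in S}f_s$ and $\sum_{s\in S'}f_s$ over $\K$, then $\|\x_S-\x_{S'}\|\le \frac{2|S'\setminus S|\,G}{\lambda(|S|+|S'|)}$. I would prove it by adding the two inequalities $\sum_{s\in S}f_s(\x_{S'})\ge\sum_{s\in S}f_s(\x_S)+\tfrac{|S|\lambda}{2}\|\x_S-\x_{S'}\|^2$ and $\sum_{s\in S'}f_s(\x_S)\ge\sum_{s\in S'}f_s(\x_{S'})+\tfrac{|S'|\lambda}{2}\|\x_S-\x_{S'}\|^2$ (each from strong convexity plus the first-order optimality of the respective minimizer over $\K$); the mixed terms collapse to $H(\x_S)-H(\x_{S'})$ with $H=\sum_{s\in S'\setminus S}f_s$, which is at most $\langle\nabla H(\x_S),\x_S-\x_{S'}\rangle\le |S'\setminus S|\,G\,\|\x_S-\x_{S'}\|$, and dividing by $\|\x_S-\x_{S'}\|$ gives the claim. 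Applying it with $S=\F_t$ (when nonempty) and $S'=\{1,\dots,t\}$, and using $|S'\setminus S|\le d$ together with $|S|+|S'|\ge t$, yields $\|\x_t-\z_t\|\le 2dG/(\lambda t)$, hence $f_t(\x_t)-f_t(\z_t)\le 2dG^2/(\lambda t)$.

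Finally I would dispatch the rounds with $\F_t=\emptyset$, where $\x_t$ is arbitrary: Assumptions~\ref{assum2} and~\ref{assum3} already force $\mathrm{diam}(\K)\le 2G/\lambda$ (add the strong-convexity inequality at $(\x,\y)$ to the one at $(\y,\x)$ and bound the resulting gradient difference), so $f_t(\x_t)-f_t(\z_t)\le G\cdot 2G/\lambda=2G^2/\lambda\le 2dG^2/(\lambda t)$ because $t\le d$ on such rounds --- exactly the same per-round bound. Summing $f_t(\x_t)-f_t(\z_t)\le 2dG^2/(\lambda t)$ over $t=1,\dots,T$ and using $\sum_{t=1}^{T}1/t\le 1+\ln T$ gives $R_T\le \frac{2dG^2(1+\ln T)}{\lambda}$. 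I expect the stability lemma to be the only real obstacle: forcing the denominator to scale with the number of accumulated functions rather than a constant is precisely what upgrades an $O(\sqrt{dT})$-type guarantee to $O(d\log T)$, and it is where strong convexity is indispensable; the rest is bookkeeping about the delay structure.
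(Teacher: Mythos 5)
Your proposal is correct and follows essentially the same route as the paper's proof: the be-the-leader decomposition against the undelayed leaders $\tilde{\x}_t=\argmin_{\x\in\K}\sum_{s=1}^{t}f_s(\x)$, the reduction to $\|\x_t-\tilde{\x}_t\|_2\leq 2dG/(\lambda t)$ via strong convexity of the accumulated objective together with $|[t]\setminus\F_t|\leq d$, and the harmonic sum. The only (immaterial) differences are that you prove the stability step by a symmetric two-sided strong-convexity argument with denominator $|S|+|S'|$, whereas the paper uses one-sided quadratic growth at $\tilde{\x}_t$ and drops the nonpositive $\F_t$-part by optimality of $\x_t$, and that you treat the $\F_t=\emptyset$ rounds explicitly — both yield the identical per-round bound.
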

	\begin{proof}
	Inspired by the existing analysis for FTL \citep{Garber-SIAM16}, we first define an ideal decision for each round $t=1,\dots,T$ as below
	\[
     \tilde{\x}_t=\argmin_{\x\in\K}\sum_{s=1}^{t}f_s(\x)
	\]
	which actually is even one step ahead of FTL in Eq.~\eqref{FTL-decision}. In the literature, it is well-known that such an ideal decision enjoys non-positive regret for any loss functions $f_1(\x),\dots,f_T(\x)$ over $\K$.
	\begin{lem}
	\label{lm1}
			(Lemma 6.6 in \citet{Garber-SIAM16}) Let $f_1(\x),\dots,f_T(\x)$ be a sequence of loss functions over $\K$, and let $\tilde{\x}_t=\argmin_{\x\in\K} \sum_{i=1}^{t}f_i(\x)$, then
			\[
			\sum_{t=1}^T f_t(\x_{t})-\min_{\x\in\K}\sum_{t=1}^T f_t(\x) \leq0.
			\]
	\end{lem}
	Moreover, the regret of our FTDL can be divided into two parts: one is the regret of the ideal decision, and the other is the gap between the cumulative loss of the ideal decision and our FTDL, as follows
		\begin{equation}
			\label{eq0}
			\begin{aligned}
				R_T=&\sum_{t=1}^T f_t(\x_t) - \sum_{t=1}^T f_t(\x^\ast)\\\
				=&\sum_{t=1}^T \left(f_t(\x_t) -f_t(\tilde{\x}_{t})\right)+ \sum_{t=1}^T \left(f_t(\tilde{\x}_{t})-f_t(\x^\ast)\right)
			\end{aligned}
		\end{equation}
	where $\x^\ast=\argmin_{\x\in\K}\sum_{t=1}^Tf_t(\x)$.

	By combining Lemma \eqref{lm1} with Eq.~\eqref{eq0}, it is easy to verify that
	\begin{equation}
			\label{eq1}
			\begin{aligned}
				R_T\leq& \sum_{t=1}^T \left(f_t({\x}_{t})-f_t(\tilde{\x}_{t})\right)\leq\sum_{t=1}^T\langle\nabla f_t({\x}_t),{\x}_{t}-\tilde{\x}_{t}\rangle\\
				\leq &\sum_{t=1}^T\|\nabla f_t({\x}_t)\|_2\|{\x}_{t}-\tilde{\x}_{t}\|_2\leq G\sum_{t=1}^T\|\x_t-\tilde{\x}_{t}\|_2
			\end{aligned}
		\end{equation}
	where the last inequality is due to Assumption \ref{assum2}.

	Then, we only need to bound the distance between our decision and the ideal decision. To this end, we introduce a nice property of strongly convex functions from \citet{Hazan2012}.
	\begin{lem}
		\label{lambda-strongly}
		Assume $f(\x)$ is $\lambda$-strongly convex over $\K$, it holds that
		\[
		\|\x - \x^*\|^2_2 \leq \frac{2}{\lambda} \left(f(\x) - f(\x^*)\right)
		\]
		for $\x^*=\argmin_ {\x\in\K} f(\x)$ and any $\x\in\K $.
		% Since a minimizer satisfies the property that $0\in\nabla f_t(\x^*)$, we can easily prove the result by combining with Assumption \ref{assum3}. 
	\end{lem}
	Let $F_t (\x) =\sum_{s=1}^{t}f_s(\x)$ for any $t=1,\dots,T$. Note that $F_t (\x)$ is $(t\lambda)$-strongly convex over $\K$ due to Assumption \ref{assum3}. As a result, by combining Lemma \ref{lambda-strongly} with the definition of $\tilde{\x}_t$ and $F_t(\x)$, we have
		\begin{align*}
			&\|\x_t-\tilde\x_{t}\|^2_2\\\leq& \frac{2}{t\lambda} \left( F_t(\x_t)-F_t(\tilde{\x}_{t})\right)\\
			=& \frac{2}{t\lambda}\left(\sum_{s\in\F_t} f_s(\x_t)-\sum_{s\in\F_t} f_s(\tilde{\x}_{t})\right)\\&+ \frac{2}{t\lambda}\left(\sum_{s\in [t]\setminus\F_t} f_s(\x_t) -\sum_{s\in [t]\setminus\F_t} f_s(\tilde{\x}_{t}) \right)\\\leq& \frac{2}{t\lambda}\left(\sum_{s\in [t]\setminus\F_t} f_s(\x_t) -\sum_{s\in [t]\setminus\F_t} f_s(\tilde{\x}_{t})\right)\\\leq&\frac{2}{t\lambda}\sum_{s\in [t]\setminus\F_t} \|\nabla f_s(\x_t)\|_2\|\x_t-\tilde\x_{t}\|_2\\\leq&\frac{2}{t\lambda}G\left(t-|\F_t|\right)\|\x_t-\tilde\x_{t}\|_2
		\end{align*}
		where the second inequality is due to the definition of $\x_t$.

		By combining the above inequality with the fact that any feedback from rounds $1,\dots,t-d$ must have been received by any agent $i$ at the end of round $t-1$, we have
		\begin{equation}
			\label{eq2}
			\begin{aligned}
				\|\x_t-\tilde\x_{t}\|_2\leq\frac{2G(t-|\F_t|)}{t\lambda}\leq\frac{2dG}{t\lambda}.
			\end{aligned}
		\end{equation}
		Finally, by substituting (\ref{eq2}) into (\ref{eq1}), we have
		\[
		R_T\leq\frac{2dG^2}{\lambda}\sum_{t=1}^T\frac{1}{t}\leq\frac{2dG^2(1+\ln T)}{\lambda}.
		\]
	\end{proof}
	\paragraph{Remark 1.}~First, from Theorem~\ref{thm1}, our FTDL can exploit the strong convexity of functions to achieve an $O(d\log T)$ regret bound, which is much tighter than the $O(\sqrt{dT})$ regret bound achieved by the DDA algorithm \citep{Hsieh2022}. More interestingly, our FTDL does not require any prior information and assumption about the delay. Second, Assumption~\ref{assum1} is mainly introduced for making Assumption~\ref{assum2} rigorous. Actually, it is worth noting that the regret bound in Theorem \ref{thm1} does not explicitly depend on the radius of $\K$ given by Assumption \ref{assum1}. The nice property actually inherits from the standard OCO with strongly convex functions \citep{Hazan2007}. Besides, we notice that this property naturally enables an extension of our algorithm to unconstrained domains, which will be detailed in Section 3.4.
	
	\subsection{Approximate Follow-the-Delayed-Leader}
	Although our FTDL has already achieved the desired regret bound, two unsatisfactory points still exist, which could possibly limit its application. First, FTDL requires full-information feedback, i.e., the total information about each loss function $f_t(\x)$. However, in general, one may only be able to receive the gradient of each loss function $f_t(\x)$, i.e., $\nabla f_t(\x_t)$. Second, similar to FTL, our FTDL also needs to solve an offline optimization problem to compute its decision, which may be time-consuming in practice. To address these issues, we further develop an approximate variant of FTDL, by combining it with gradient-based surrogate loss functions.

	Specifically, inspired by the approximate variant of FTL \citep{Hazan2007}, it is natural to apply our FTDL to the surrogate loss functions defined in Eq.~\eqref{surrograte-sc}, which adopts the following decision
	\begin{equation}
	\label{straightforward-AFTDL}
	\begin{split}
	% \x_t =& \argmin_{\x \in K} \sum_{s=1}^{t-1} \tilde{f}_s(\x)\\
	\x_t=&\argmin_{\x \in \K} \sum_{s\in\F_t} \left(\langle\nabla f_{s}(\x_s), \x \rangle + \frac{\lambda}{2} \|\x - \x_s\|_2^2\right).
	\end{split}
	\end{equation}
	At first glance, to compute $\x_t$ in Eq.~\eqref{straightforward-AFTDL}, these agents need to share both the gradient $\nabla f_{t}(\x_t)$ and the decision $\x_t$ with others, which doubles the communication costs of the DDA algorithm \citep{Hsieh2022}. However, it is worth noting that $\x_t$ in Eq.~\eqref{straightforward-AFTDL} is equivalent to
	\begin{equation}
	\label{s2-AFTDL}
		\x_t = \argmin_{\x \in \K} \sum_{s \in \F_t} \left( \langle \nabla f_{s}(\x_s)-\lambda\x_s, \x \rangle + \frac{\lambda}{2} \|\x\|_2^2 \right)
	\end{equation}
	because the term $(\lambda/2)\|\x_s\|_2^2$ does not affect the minimizer of the optimization problem in Eq.~\eqref{straightforward-AFTDL}. As a result, these agents only need to share one vector $\z_t=\nabla f_t(\x_t)-\lambda\x_t$  with others, which enjoys the same communication costs as the DDA algorithm \citep{Hsieh2022}. 
	Therefore, the decision $\x_t$ can be computed as
	\begin{equation*}
		\x_t = \argmin_{\x \in \K} \sum_{s \in \F_t} \left( \langle \z_s, \x \rangle + \frac{\lambda}{2} \|\x\|_2^2 \right).
	\end{equation*}

	The detailed procedures of our approximate FTDL are summarized in Algorithm \ref{alg2} from the point of view of any single agent $i$. 
	
	Now, we  demonstrate that it can also exploit the strong convexity of functions as the original FTDL.
	\begin{algorithm}[tb]
		\caption{ Approximate FTDL (Agent $i$)}
		\label{alg2}
		\begin{algorithmic}[1]
			%\STATE \textbf{Initialize:}
			\STATE \textbf{Initialize:} $\G_i=\emptyset$ and $t=1$
			\WHILE{not stopped}
			\STATE Asynchronously receive feedback $\z_s=\nabla f_s(\x_s)-\lambda\x_s$ from the round $s$
			\STATE $\G_i=\G_i\cup\{s\}$
			\STATE Relay $\z_s$ if necessary
			\IF{the agent becomes active}
			\STATE Set $\F_t=\G_i$ and play
			\[\x_t=\argmin_{\x\in\K}\sum_{s\in\F_t}\left(\langle\z_s,\x\rangle+\frac{\lambda}{2}\|\x\|^2_2\right)\]
			\ENDIF
			\ENDWHILE
		\end{algorithmic}
	\end{algorithm}
	\begin{thm}
		\label{thm2}
		Under Assumptions \ref{assum2}, \ref{assum1}, and \ref{assum3}, Algorithm \ref{alg2} ensures
		\begin{align*}
			R_T \leq \frac{2d(G+2\lambda R)^2(\ln T+1)}{\lambda}.
		\end{align*}
	\end{thm}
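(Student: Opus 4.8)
The plan is to reduce Theorem~\ref{thm2} to Theorem~\ref{thm1} by viewing Algorithm~\ref{alg2} as FTDL run on the strongly convex surrogate losses of Eq.~\eqref{surrograte-sc}. First I would fix, for each $t$, the surrogate
\[
\tilde{f}_t(\x)=f_t(\x_t)+\langle\nabla f_t(\x_t),\x-\x_t\rangle+\frac{\lambda}{2}\|\x-\x_t\|_2^2,
\]
and observe that, since terms constant in $\x$ do not affect minimizers, $\argmin_{\x\in\K}\sum_{s\in\F_t}\tilde{f}_s(\x)$ coincides with the decision in Eq.~\eqref{straightforward-AFTDL}, which in turn coincides with the decision actually played in Algorithm~\ref{alg2} after absorbing $-\lambda\langle\x_s,\x\rangle$ into $\z_s=\nabla f_s(\x_s)-\lambda\x_s$ and discarding the constant $\tfrac{\lambda}{2}\|\x_s\|_2^2$. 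Hence the iterates $\x_1,\dots,\x_T$ of Algorithm~\ref{alg2} are exactly those of FTDL on the loss sequence $\tilde{f}_1,\dots,\tilde{f}_T$, with the same feedback structure (any $\z_s$ with $s\le t-d$ has reached every agent by the end of round $t-1$).

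Next I would pass from the true regret to the surrogate regret. Applying Assumption~\ref{assum3} with anchor point $\x_t$ gives $f_t(\x)\ge\tilde{f}_t(\x)$ for all $\x\in\K$, while $\tilde{f}_t(\x_t)=f_t(\x_t)$; therefore, writing $\x^\ast=\argmin_{\x\in\K}\sum_{t=1}^T f_t(\x)$,
\[
R_T=\sum_{t=1}^T f_t(\x_t)-\sum_{t=1}^T f_t(\x^\ast)\le\sum_{t=1}^T\tilde{f}_t(\x_t)-\min_{\x\in\K}\sum_{t=1}^T\tilde{f}_t(\x),
\]
so it is enough to bound the right-hand side, which is precisely the regret of FTDL applied to the surrogates.

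Then I would verify that $\tilde{f}_1,\dots,\tilde{f}_T$ satisfy the hypotheses of Theorem~\ref{thm1}. Each $\tilde{f}_t$ is $\lambda$-strongly convex (its Hessian is $\lambda I$), and on $\K$ its gradient is $\nabla\tilde{f}_t(\x)=\nabla f_t(\x_t)+\lambda(\x-\x_t)=\z_t+\lambda\x$, so by Assumption~\ref{assum2} together with Assumption~\ref{assum1} (which, since $\ze\in\K$, yields $\|\x\|_2\le R$ over $\K$) we obtain $\|\nabla\tilde{f}_t(\x)\|_2\le G+2\lambda R$. Invoking the conclusion of Theorem~\ref{thm1} with $G$ replaced by $G+2\lambda R$ then gives the stated bound $R_T\le\frac{2d(G+2\lambda R)^2(\ln T+1)}{\lambda}$.

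The main obstacle is making this reduction airtight rather than the arithmetic: one must confirm that the proof of Theorem~\ref{thm1}---in particular Lemma~\ref{lm1} and the distance estimate bounding $\|\x_t-\tilde{\x}_t\|_2$ via the $(t\lambda)$-strong convexity of $\sum_{s=1}^t\tilde{f}_s$---remains valid even though the surrogate losses $\tilde{f}_t$ are themselves defined in terms of the algorithm's own earlier iterates. This is fine because Lemma~\ref{lm1} is stated for an arbitrary loss sequence and the FTDL argument never exploits any independence between the losses and the iterates; the only genuinely new ingredient is the uniform gradient bound on $\tilde{f}_t$ over all of $\K$, which is exactly where Assumption~\ref{assum1} enters and why, in contrast to Theorem~\ref{thm1}, the bound here depends on the size of the decision set.
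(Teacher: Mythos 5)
Your proposal is correct and follows essentially the same route as the paper's own proof: pass to the surrogate losses of Eq.~\eqref{surrograte-sc} (which differ from the paper's $\langle\z_t,\x\rangle+\tfrac{\lambda}{2}\|\x\|_2^2$ only by terms constant in $\x$), use Assumption~\ref{assum3} to upper-bound the true regret by the surrogate regret, and then run the FTDL analysis (Lemma~\ref{lm1} plus the $(t\lambda)$-strong-convexity distance estimate) on the surrogates with Lipschitz constant $G+2\lambda R$. The only difference is organizational: you invoke Theorem~\ref{thm1} as a black box with $G$ replaced by $G+2\lambda R$, while the paper re-derives those steps inline; your closing remark that the argument never requires the losses to be independent of the iterates correctly disposes of the one point where the reduction could have failed.
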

	\begin{proof}
		For any $t=1,\dots,T$, let \(
		\tilde{f}_t(\x)=\langle\z_t,\x\rangle+\frac{\lambda}{2}\|\x\|_2^2
		\) and \(
		F_t(\x)=\sum_{s=1}^t \tilde{f}_s(\x)\). Similar to the proof of Theorem \ref{thm1}, for each round $t$, we can define an ideal decision as below
		\begin{equation}
			\label{eq3}
			\begin{split}
				\tilde{\x}_t=&\argmin_{\x\in\K}\sum_{s=1}^{t}\tilde{f}_s(\x)=\argmin_{\x\in\K}F_t(\x).
			\end{split}
		\end{equation} 	
		According to Assumption \ref{assum3}, we have
		\begin{equation}
			\label{eq4}
			\begin{aligned}
				R_T = &\sum_{t=1}^T f_t(\x_t) - \sum_{t=1}^T f_t(\x^\ast) \\ \leq&  \sum_{t=1}^T \left(\langle \nabla f_t(\x_t), \x_t  - \x^\ast \rangle-\frac{\lambda}{2} \|\x_t - \x^\ast\|^2_2 \right)\\
				= & \sum_{t=1}^T \left( \tilde{f}_t(\x_t) - \tilde{f}_t(\x^\ast) \right)\\
				= & \sum_{t=1}^T \left( \tilde{f}_t(\x_t) - \tilde{f}_t(\tilde{\x}_t) \right)+ \sum_{t=1}^T \left( \tilde{f}_t(\tilde{\x}_t) - \tilde{f}_t(\x^\ast) \right)
			\end{aligned}
		\end{equation}
		where $\x^\ast=\argmin_{\x\in\K}\sum_{t=1}^Tf_t(\x)$.
		
		Then, we notice that the first term in the right side of Eq.~\eqref{eq4} can be bounded by analyzing the distance of our decision and the ideal decision. Specifically, under Assumptions \ref{assum2} and \ref{assum1}, for any $t=1,\dots,T$, we have
		\begin{equation}
			\label{bound-zt}
			\begin{split}
				&\|\z_t\|_2=\|\nabla f_t(\x_t)-\lambda\x_t\|_2\\
				\leq&\|\nabla f_t(\x_t)\|_2+\|\lambda\x_t\|_2\leq G+\lambda R.
			\end{split}
		\end{equation}
		Moreover, for any $t=1,\dots,T$, we can show that $\tilde{f}(\x)$ is $(G+2\lambda R)$-Lipschitz, i.e., for all $\x \in \K$, it holds that
		\begin{equation}
			\label{approx-LIPS}
			\begin{split}
				&\|\nabla \tilde{f}_t(\x)\|_2 \leq \|\z_t\|_2 + \lambda \|\x\|_2 \\&\leq (G + \lambda R) + \lambda R = G + 2\lambda R
			\end{split}
		\end{equation}
		where the second inequality is due to Eq.~\eqref{bound-zt} and Assumption~\ref{assum1}.
		
		From Eq.~\eqref{approx-LIPS}, it is easy to verify that
		\begin{equation}
			\label{dis-approx}
			\begin{split}
				\sum_{t=1}^T\left(\tilde{f}_t(\x_t) -\tilde{f}_t(\tilde{\x}_t)\right)  
				\leq (G+2\lambda R)\sum_{t=1}^T\|\x_t-\tilde{\x}_t\|_2.
			\end{split}
		\end{equation} 
		Note that $F_t (\x)$ is $(t\lambda)$-strongly convex over $\K$. As a result, by combining Lemma \ref{lambda-strongly} with the definition of $\tilde{\x}_t$ and $F_t(\x)$, we have
		\begin{align*}
			&\|\x_t-\tilde\x_{t}\|^2_2\\\leq& \frac{2}{t\lambda} \left( F_t(\x_t)-F_t(\tilde{\x}_{t})\right)\\
			=& \frac{2}{t\lambda}\sum_{s\in\F_t} \left(\tilde{f}_s(\x_t)-\tilde{f}_s(\tilde{\x}_{t})\right)\\&+ \frac{2}{t\lambda}\sum_{s\in [t]\setminus\F_t}\left( \tilde{f}_s(\x_t) -\tilde{f}_s(\tilde{\x}_{t}) \right)\\\leq& \frac{2}{t\lambda}\sum_{s\in [t]\setminus\F_t}\left( \tilde{f}_s(\x_t) -\tilde{f}_s(\tilde{\x}_{t}) \right)\\\leq&\frac{2\left(t-|\F_t|\right)(G+2\lambda R)}{t\lambda}\|\x_t-\tilde\x_{t}\|_2
		\end{align*}
		where the second inequality is due to the definition of $\x_t$, and the last inequality is due to Eq.~\eqref{approx-LIPS}.
		
		By combining the above inequality with the fact any feedback from rounds $1,\dots,t-d$ must have been received by any agent $i$ at the end of round $t-1$, we have
		\begin{equation}
			\label{eq2-appro}
			\begin{aligned}
				\|\x_t-\tilde\x_{t}\|_2\leq\frac{2d(G+2\lambda R)}{t\lambda}.
			\end{aligned}
		\end{equation}
		Now, we only need to bound the last term in the right side of Eq.~\eqref{eq4}. Specifically, by applying Lemma \ref{lm1} with the definition of $\tilde{\x}_t$, it is not hard to verify that
		\begin{equation}
			\label{eq5}
			\begin{split}
				\sum_{t=1}^T\left(\tilde{f}_t(\tilde{\x}_t)-\tilde{f}_t(\x^\ast)\right)
				\leq 0
			\end{split}
		\end{equation}
		where the last inequality is also due to Assumption \ref{assum1}.
		
		Finally, by combining Eq.~\eqref{eq4} with Eq.~\eqref{dis-approx}, Eq.~\eqref{eq2-appro}, and Eq.~\eqref{eq5}, we
		have
		\begin{equation*}
			\begin{split}
				R_T\leq& \sum_{t=1}^T\frac{2d(G+2\lambda R)^2}{t\lambda}\\
				\leq&\frac{2d(G+2\lambda R)^2(\ln T+1)}{\lambda}.
			\end{split}
		\end{equation*}
	\end{proof}
	\paragraph{Remark 2.}~ From Theorem \ref{thm2}, the approximate FTDL enjoys the same $O(d\log T)$ regret bound as FTDL, and also does not require any prior information and assumption about the delay. However, we want to clarify that our A-FTDL actually also relies on the delay via the set $\F_t$. Although such a reliance is not easily noticeable, it actually plays a role similar to a delay-dependent step-size used in DDA~\citep{Hsieh2022}. Moreover, we emphasize that although Theorem~\ref{thm2} requires the boundedness of decision sets, i.e., Assumption \ref{assum1}, it is actually easy to extend our Algorithm~\ref{alg2} to further handle the unbounded set by using the fact that the optimal solution of a strongly convex function must lie in a ball centered at the origin, as detailed in Section 3.4.
	
	\paragraph{Remark 3.}~ We want to clarify that in A-FTDL the argmin subproblem that we need to solve is equal to a projection operation onto the convex set $\K$, i.e., 
	\begin{align*}
		\x_t &=\argmin_{\x\in\K}\sum_{s\in\F_t}\left(\langle\z_s,\x\rangle+\frac{\lambda}{2}\|\x\|^2_2\right)\\\
		& = \argmin_{\x\in\K}\left\{\frac{\lambda}{2}\left\|\x + \frac{\sum_{s\in\F_t} \z_s}{\lambda}\right\|^2_2 - \frac{1}{2\lambda}\left\|\sum_{s\in\F_t} \z_s\right\|^2_2 \right\}\\\
		&= \argmin_{\x\in\K}\left\|\x + \frac{\sum_{s\in\F_t} \z_s}{\lambda}\right\|^2_2\
	\end{align*}
	This computational cost is also required by the previous DDA algorithm~\citep{Hsieh2022}, and many other online optimization algorithms.
	
	\begin{figure*}[t]
		\centering
		\subfigure[ijcnn1]{	\includegraphics[width=0.48\textwidth]{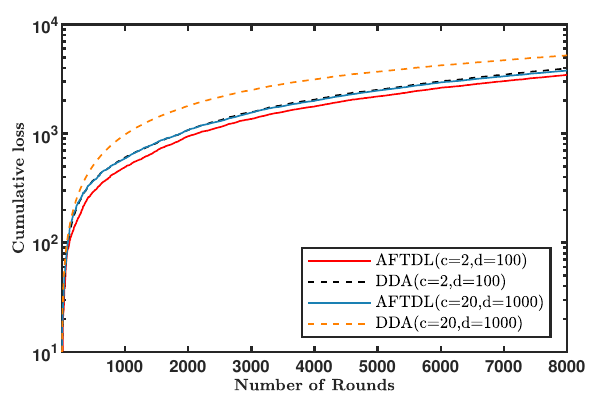}}
		\subfigure[w8a]{\includegraphics[width=0.48\textwidth]{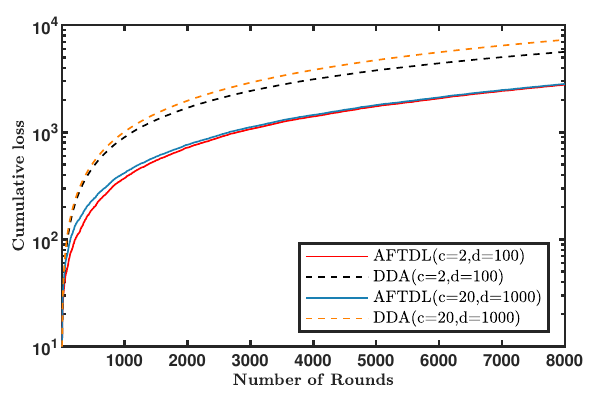}}
		\\
		\subfigure[phishing]{\includegraphics[width=0.48\textwidth]{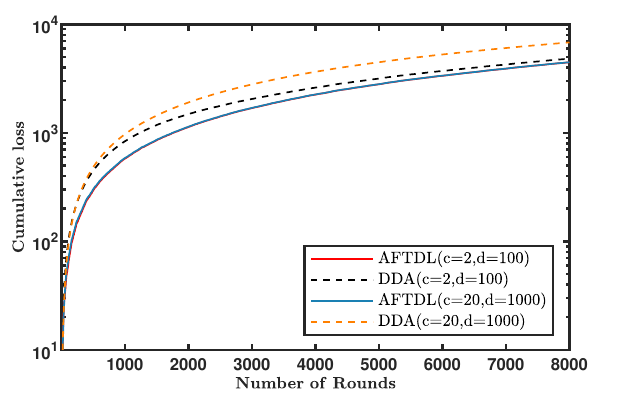}}
		\subfigure[a9a]{\includegraphics[width=0.48\textwidth]{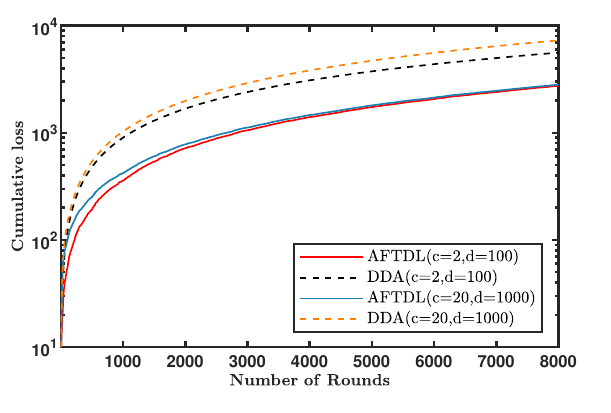}}
		\caption{Comparisons of our A-FTDL against DDA on delayed $2$-agent online binary classification.}
		\label{fig}
	\end{figure*}
	\subsection{Extension to Unconstrained Domains}
	Furthermore, we consider the unconstrained setting where $\mathcal{K} = \mathbb{R}^n$, and extend our algorithm to handle this case while maintaining strong theoretical guarantees. Note that such an extension follows from existing studies on delayed OCO with strongly convex functions~\citep{Wan-NIPS24}. Here, we provide the details for completeness. To be precise, in unconstrained domains, the original Assumptions 1 and 2 may not hold globally. To address this challenge, we introduce two weaker and more reasonable assumptions.
	\begin{itemize}
		\item All loss functions are $G$-Lipschitz continuous at the origin, i.e., $|f_t(\mathbf{0}) - f_t(\x)| \leq G\|\x\|_2$ for all $\x$ in a neighborhood of $\mathbf{0}$.
		
		\item Each loss function $f_t$ is $\lambda$-strongly convex over $\mathbb{R}^n$.
	\end{itemize}
	
	Under these two weaker assumptions, by applying Lemma \ref{lambda-strongly}, it is not hard to verify that,
	\begin{equation}
		\|\x^*\|_2 \leq \frac{2}{\lambda}\left(f_t(\mathbf{0}) - f_t(\x^*)\right) \leq \frac{2G}{\lambda}
	\end{equation}
	where $\x^* = \argmin_{\x \in \mathbb{R}^n} \sum_{t=1}^T f_t(\x)$. As a consequence, the player only needs to select actions from the constrained set:
	
	\begin{equation}
		\mathcal{K}' = \left\{ \x \in \mathbb{R}^n \middle| \|\x\|_2 \leq \frac{2G}{\lambda} \right\} 
	\end{equation}
	which naturally satisfies Assumption 2 with radius $R = 2G/\lambda$. We further assume that all loss functions satisfy the original Assumptions 1 and 3 over the restricted set $\mathcal{K}'$. This allows us to reduce the unconstrained problem to a constrained one over $\mathcal{K}'$. Theorem \ref{thm1} shows that the regret bound of FTDL is independent of the domain's radius and therefore remains valid. For A-FTDL, we derive the regret bound from Theorem~\ref{thm2} by applying Algorithm~\ref{alg2} over a bounded domain of radius $\frac{2G}{\lambda}$, then we have
	\[
	R_T \leq \frac{50dG^2(\ln T+1)}{\lambda}.
	\]

	\section{Experiments}
	In this section, we conduct simulation experiments on four publicly available datasets---ijcnn1, w8a, phishing and a9a from the LIBSVM repository \citep{Chang2011} to compare the performance of our approximate FTDL (A-FTDL) against the DDA algorithm \citep{Hsieh2022}. The details of these datasets are summarized in Table \ref{tab1}. Both algorithms are implemented with Matlab and tested on a laptop with 2.6GHz CPU and 16GB memory.
	
	\begin{table}[t]
		\centering
		\begin{tabular}{ c | c | c |c}
			\hline
			Dataset & \# Examples & \# Features (i.e., $n$) & \# Classes \\ \hline
			ijcnn1 & 49990 & 22 & 2\\
			w8a & 49749 & 300 & 2\\
			phishing & 11055 & 68 & 2\\
			a9a & 32561  & 123 &2 \\
			\hline
		\end{tabular}
		\captionsetup{justification=centering} 
		\caption{Datasets used in our experiments.}
		\label{tab1}
	\end{table}
	Specifically, we randomly select $10000$ examples from the original data set. Then we use the first $8000$ data points as the training set and the subsequent $2000$ data as the test set. Thus, the total number of rounds $T$ in our experiments is $8000$.
	
	We consider a delayed multi-agent online binary classification problem with strongly convex loss functions. Under two scenarios: (i) a system with $2$ agents and a maximum delay of $100$, and (ii) a system with $20$ agents and a larger maximum delay of $1000$. These settings are inspired by the practical observation that as the number of agents increases, the communication and coordination overhead often grows, leading to longer feedback delays. In each round $t\in[T]$, one of the agents will be active, and choose a decision $\x_t$ from the following convex set
	\[\K = \{ \x \in \R^{n} \mid \|\x\|_2 \leq 1 \}\]
	which satisfies Assumption~\ref{assum1} with $R = 1$, where $n$ is equal to the number of features in the dataset. Then, an example $(\w_t,y_t)$ is selected from the dataset, and causes a loss
	\[
	f_t(\x_t)=\max \left\{ 1-y_t\w_t^\top\x_t, 0 \right\} + \frac {\lambda}{2}\|\x_t\|^2_2 
	\]
	where $\w_t\in \R^{n}$ denotes the feature vector, $y_t\in \{ -1, 1\}$ denotes the class label, and $\lambda=0.01$. It is easy to verify that the loss function $f_t(\x)$ satisfies Assumption \ref{assum3}, and ensures
	\[\|\nabla f_t(\x)\|_2 \leq \lambda\|\x\|_2 + \|\w_t\|_2.\] 
	Note that by a single pass of the datasets, we can compute the maximum norm of $\w_t$, and denote this value as $w_{\max}$. Then, the above loss function $f_t(\x)$ also satisfies Assumption \ref{assum2} with
	\[G = \lambda R + w_{\max}=0.01+w_{\max}.
	\]
	Next, to simulate the delays, we independently generate random delay sequences for each agent. For the 2-agent scenario, each agent's delays are randomly sampled from the range $[1,100]$. For the 20-agent scenario, each agent's delays are randomly sampled from the range $[1,1000]$. Moreover, we recall that the DDA algorithm needs to choose an appropriate $\mathcal{R}(\x)$ and tune the parameter $\eta_t$. According to Proposition 7 of~\cite{Hsieh2022}, we set $\mathcal{R}(\x)=(1/2)\|\x\|_2^2$ and use
	\[\eta_t = \frac{R}{\sqrt{2}G\sqrt{(1+2d)(|\F_t|+d+1)}}\]
	which ensures the $O(\sqrt{dT})$ regret bound. 

	Fig.~\ref{fig} shows the comparisons of our A-FTDL and the DDA algorithm~\citep{Hsieh2022} on the two settings. We observe that in both settings, the cumulative loss of A-FTDL is consistently lower than that of DDA, verifying the theoretical improvement on regret by exploiting strong convexity. Moreover, as the number of agents increases and the delays become larger, the performance gap between A-FTDL and DDA widens, further demonstrating the benefit of strong convexity for removing the need for prior information and assumptions about the delay.

	Table~\ref{tab:runtime} shows that the running times of A-FTDL and DDA are comparable under both settings. Particularly, in the 2-agent setting with smaller delays, the difference between the two algorithms is even smaller, which is consistent with our previous complexity analysis.
	
	Finally, we evaluate the test accuracy of the final decision vector $\x_T$ obtained by each algorithm on the $2000$ test examples. The evaluation is conducted under the setting of 2-agent with $d_{\max} = 100$ and 20-agent with $d_{\max} =  1000$. From Table~\ref{tab:accuracy}, our A-FTDL has higher testing accuracy than DDA in two datasets: a9a and w8a. In the other two datasets, A-FTDL and DDA have the same testing accuracy. We want to emphasize that these results are reasonable, because the testing accuracy actually is not directly determined by the regret bound.
	\begin{table}[t]
	\centering
	\begin{tabular}{c|c|c|c|c}
		\hline
		\multicolumn{1}{c|}{Dataset}  & \multicolumn{2}{c|}{2 agents, $d=100$} & \multicolumn{2}{c}{20 agents, $d=1000$} \\
		\cmidrule(lr){2-3} \cmidrule(lr){4-5}
		& A-FTDL & DDA & A-FTDL & DDA \\
		\hline
		a9a      & 0.1970 & 0.1465 & 1.6503 & 1.2387 \\
		ijcnn1   & 0.1338 & 0.1112 & 1.2645 & 1.1594 \\
		phishing & 0.1271 & 0.1377 & 1.3161 & 1.2222 \\
		w8a      & 0.2271 & 0.1494 & 1.5526 & 1.1928 \\
		\hline
	\end{tabular}
	\captionsetup{justification=centering} 
	\caption{Average per-round running time (seconds) comparison on different datasets}
	\label{tab:runtime}
\end{table}

\begin{table}[t]
	\centering
	\begin{tabular}{c|c|c|c|c}
		\hline
		\multicolumn{1}{c|}{Dataset}  & \multicolumn{2}{c|}{2 agents, $d=100$} & \multicolumn{2}{c}{20 agents, $d=1000$} \\
		\cmidrule(lr){2-3} \cmidrule(lr){4-5}
		& A-FTDL & DDA & A-FTDL & DDA \\
		\hline
		a9a       & 83.70\% & 76.05\% &82.55\%&76.00\%\\
		ijcnn1    & 90.05\% & 90.05\% &89.80\%&89.80\%\\
		w8a       & 89.80\% & 89.65\% &89.80\%&89.60\%\\
		phishing  & 56.60\% & 56.60\% &56.40\% &56.40\%\\
		\hline
	\end{tabular}
	\captionsetup{justification=centering} 
	\caption{Test accuracy (\%) comparison on  on different datasets}
	\label{tab:accuracy}
\end{table}	
 
\section{Conclusion and Future Work}
In this paper, we investigate multi-agent asynchronous online optimization with delays, and develop two novel algorithms called FTDL and A-FTDL for the case of strongly convex functions. According to the analysis, our two algorithms exploit the strong convexity to achieve an $O(d\log T)$ regret bound, which is much tighter than the existing $O(\sqrt{dT})$ regret bound achieved by only using the convexity. More interestingly, different from the existing algorithm that needs to know the maximum delay beforehand or requires an additional assumption on the arrival order of feedback, our two algorithms do not require any prior information or assumption about the delay. Finally, numerical experiments show that our A-FTDL outperforms the existing algorithm in real applications.

An open question is whether our algorithms and analysis can be further extended to a broader class of exponentially concave (exp-concave) functions. We notice that in the standard OCO, there is also a variant of FTL~\citep{Hazan2007} that enjoys an $O(n\log T)$ regret bound for exp-concave functions. Very recently,~\cite{qiu25a} have proposed algorithms for delayed OCO with exp-concave functions, achieving a regret bound of $O(d n \log T)$. However, the extension to multi-agent asynchronous online optimization seems still highly non-trivial. Therefore, we leave this extension for future work.
\section*{Acknowledgements}
This work was partially supported by National Natural Science Foundation of China (62306275) and Yongjiang Talent Introduction Programme (2023A-193-G). The authors would also like to thank the anonymous reviewers for their helpful comments.

\bibliographystyle{named}	
\bibliography{ref}

\end{document}